\newtheorem{example}{Example}
\newtheorem{definition}{Definition}
\newtheorem{proposition}{Proposition}
\newcommand{\ol}[1]{\overline{#1}}
\renewcommand{\leq}{\leqslant}
\newcommand{\cspMIof}[2]{\ensuremath{\mathit{CR}^{#2}(#1)}} %
\newcommand{\OCFsolutionsOf}[1]{\ensuremath{\mathit{Sol_{OCF}}(#1)}}
\newcommand{\OCFsolutionsRnMI}[1]{\ensuremath{\OCFsolutionsOf{\cspMIof{\R_n}{n-1}}}}
\newcommand{\syntheticKB}[1]{\texttt{kb\_synth<\(n\)>\_c<\(2n\!\!-\!\!1\)>.pl}}
\newcommand{\R}{\ensuremath{\mathcal R}}
\newcommand{\Mod}{\mbox{\it Mod}\,}
\newcommand*{\centernot}{%
	\mathpalette\@centernot
}
\def\@centernot#1#2{%
	\mathrel{%
		\rlap{%
			\settowidth\dimen@{$\m@th#1{#2}$}%
			\kern.5\dimen@
			\settowidth\dimen@{$\m@th#1=$}%
			\kern-.5\dimen@
			$\m@th#1\not$%
		}%
		{#2}%
	}%
}
\DeclareRobustCommand\nmableitSymb{\mathrel|\mkern-.5mu\joinrel\sim} %
\newcommand{\nmableit}{\ensuremath{\mbox{$\,\nmableitSymb\,$}}} %
\newcommand{\beweisendezeichen}%
{\penalty50\hspace*{0pt plus 1fil}\parfillskip=0pt\mbox{$\Box$}}
\newcommand{\fussnoteOhneMarkierung}[1]%
{%
\footnote{#1}%
\addtocounter{footnote}{-1}%
}
\newcommand{\satzCL}[2]{\ensuremath{(#1|#2)}}
\newlength{\abstand}
\newcommand{\LpropSig}[1]{\ensuremath{\mathcal{L}_{#1}}}
\newcommand{\LcondSig}[1]{\ensuremath{\satzCL{\mathcal{L}}{\mathcal{L}}_{#1}}}
\newcommand{\UnivSig}[1]{\ensuremath{\Omega_{#1}}}
\newcommand{\thistheoremname}{}
\newcommand\worldMargin[2]{{#1}_{\mid {#2}}}
\newcommand{\wableit}[1]{\ensuremath{\nmableit_{#1}^\textsf{w}}}
\newcommand{\wableitDelta}{\wableit{\Delta}}
\newcommand{\worder}[1]{\ensuremath{<_{#1}^\textsf{w}}}
\newcommand{\worderDelta}{\worder{\Delta}}
\newcommand{\wordereq}[1]{\ensuremath{\leq_{#1}^\textsf{w}}}
\newcommand{\wordereqDelta}{\wordereq{\Delta}}
\newcommand{\signa}{\Sigma}
\newcommand{\nmDelta}{{\,|\hspace{-0.5em}\sim_{\Delta}\,}}
\newcommand{\nmany}[2]{{\,|\hspace{-0.5em}\sim^{#2}_{#1}\,}}
\newcommand{\splitpost}{(SynSplit)}
\newcommand{\splitcup}{\bigcup\limits_{\signa_1, \signa_2}}
\newcommand{\relevance}{(Rel)}
\newcommand{\indep}{(Ind)}
\newcommand{\fctOPname}{{\ensuremath{\mathit{OP}}}} %
\newcommand{\fctOP}[1]{\ensuremath{\fctOPname(#1)}} %
\newcommand{\fctOPvector}[1]{\ensuremath{(\R_0,\ldots,\R_k)}} %
\newcommand{\falsWname}{\ensuremath{\mathit{\xi}}}
\newcommand{\falsW}{\ensuremath{\falsWname}}
\newcommand{\nameSZ}{W\xspace}
\newcommand{\systemSZ}{{system~\nameSZ}\xspace}
\newcommand{\namePS}{preferred structure\xspace}
\newcommand{\perpendicular}%
{\begin{sideways}$\models$\end{sideways}}
\newlength{\spalteAbst}
\newlength{\spalteAbstGr}
\newlength{\spalteAbstGGr}
\newlength{\zeileAbst}
\newcommand{\ALvariableE}{\ensuremath{a}}
\newcommand{\ALvariableZ}{\ensuremath{b}}
\newcommand{\ALvariableD}{\ensuremath{c}}
\newcommand{\ALvariableV}{\ensuremath{d}}
\newcommand{\ALvariableF}{\ensuremath{e}}
\newlength{\AbstandZwischenLiteralen}
\newcommand{\setzeAbstandZwischenLiteralen}{\hspace{\AbstandZwischenLiteralen}}
\newcommand{\omegaFuenf}[1]{%
    \IfEqCase{#1}{%
        {01}{\ensuremath{\omegaAafuenf{\ALvariableE}{\ALvariableZ}{\ALvariableD}{\ALvariableV}{\ALvariableF}}}%
        {02}{\ensuremath{\omegaAbfuenf{\ALvariableE}{\ALvariableZ}{\ALvariableD}{\ALvariableV}{\ALvariableF}}}%
        {03}{\ensuremath{\omegaBafuenf{\ALvariableE}{\ALvariableZ}{\ALvariableD}{\ALvariableV}{\ALvariableF}}}%
        {04}{\ensuremath{\omegaBbfuenf{\ALvariableE}{\ALvariableZ}{\ALvariableD}{\ALvariableV}{\ALvariableF}}}%
        {05}{\ensuremath{\omegaCafuenf{\ALvariableE}{\ALvariableZ}{\ALvariableD}{\ALvariableV}{\ALvariableF}}}%
        {06}{\ensuremath{\omegaCbfuenf{\ALvariableE}{\ALvariableZ}{\ALvariableD}{\ALvariableV}{\ALvariableF}}}%
        {07}{\ensuremath{\omegaDafuenf{\ALvariableE}{\ALvariableZ}{\ALvariableD}{\ALvariableV}{\ALvariableF}}}%
        {08}{\ensuremath{\omegaDbfuenf{\ALvariableE}{\ALvariableZ}{\ALvariableD}{\ALvariableV}{\ALvariableF}}}%
        {09}{\ensuremath{\omegaEafuenf{\ALvariableE}{\ALvariableZ}{\ALvariableD}{\ALvariableV}{\ALvariableF}}}%
        {10}{\ensuremath{\omegaEbfuenf{\ALvariableE}{\ALvariableZ}{\ALvariableD}{\ALvariableV}{\ALvariableF}}}%
        {11}{\ensuremath{\omegaFafuenf{\ALvariableE}{\ALvariableZ}{\ALvariableD}{\ALvariableV}{\ALvariableF}}}%
        {12}{\ensuremath{\omegaFbfuenf{\ALvariableE}{\ALvariableZ}{\ALvariableD}{\ALvariableV}{\ALvariableF}}}%
        {13}{\ensuremath{\omegaGafuenf{\ALvariableE}{\ALvariableZ}{\ALvariableD}{\ALvariableV}{\ALvariableF}}}%
        {14}{\ensuremath{\omegaGbfuenf{\ALvariableE}{\ALvariableZ}{\ALvariableD}{\ALvariableV}{\ALvariableF}}}%
        {15}{\ensuremath{\omegaHafuenf{\ALvariableE}{\ALvariableZ}{\ALvariableD}{\ALvariableV}{\ALvariableF}}}%
        {16}{\ensuremath{\omegaHbfuenf{\ALvariableE}{\ALvariableZ}{\ALvariableD}{\ALvariableV}{\ALvariableF}}}%
        {17}{\ensuremath{\omegaIafuenf{\ALvariableE}{\ALvariableZ}{\ALvariableD}{\ALvariableV}{\ALvariableF}}}%
        {18}{\ensuremath{\omegaIbfuenf{\ALvariableE}{\ALvariableZ}{\ALvariableD}{\ALvariableV}{\ALvariableF}}}%
        {19}{\ensuremath{\omegaJafuenf{\ALvariableE}{\ALvariableZ}{\ALvariableD}{\ALvariableV}{\ALvariableF}}}%
        {20}{\ensuremath{\omegaJbfuenf{\ALvariableE}{\ALvariableZ}{\ALvariableD}{\ALvariableV}{\ALvariableF}}}%
        {21}{\ensuremath{\omegaKafuenf{\ALvariableE}{\ALvariableZ}{\ALvariableD}{\ALvariableV}{\ALvariableF}}}%
        {22}{\ensuremath{\omegaKbfuenf{\ALvariableE}{\ALvariableZ}{\ALvariableD}{\ALvariableV}{\ALvariableF}}}%
        {23}{\ensuremath{\omegaLafuenf{\ALvariableE}{\ALvariableZ}{\ALvariableD}{\ALvariableV}{\ALvariableF}}}%
        {24}{\ensuremath{\omegaLbfuenf{\ALvariableE}{\ALvariableZ}{\ALvariableD}{\ALvariableV}{\ALvariableF}}}%
        {25}{\ensuremath{\omegaMafuenf{\ALvariableE}{\ALvariableZ}{\ALvariableD}{\ALvariableV}{\ALvariableF}}}%
        {26}{\ensuremath{\omegaMbfuenf{\ALvariableE}{\ALvariableZ}{\ALvariableD}{\ALvariableV}{\ALvariableF}}}%
        {27}{\ensuremath{\omegaNafuenf{\ALvariableE}{\ALvariableZ}{\ALvariableD}{\ALvariableV}{\ALvariableF}}}%
        {28}{\ensuremath{\omegaNbfuenf{\ALvariableE}{\ALvariableZ}{\ALvariableD}{\ALvariableV}{\ALvariableF}}}%
        {29}{\ensuremath{\omegaOafuenf{\ALvariableE}{\ALvariableZ}{\ALvariableD}{\ALvariableV}{\ALvariableF}}}%
        {30}{\ensuremath{\omegaObfuenf{\ALvariableE}{\ALvariableZ}{\ALvariableD}{\ALvariableV}{\ALvariableF}}}%
        {31}{\ensuremath{\omegaPafuenf{\ALvariableE}{\ALvariableZ}{\ALvariableD}{\ALvariableV}{\ALvariableF}}}%
        {32}{\ensuremath{\omegaPbfuenf{\ALvariableE}{\ALvariableZ}{\ALvariableD}{\ALvariableV}{\ALvariableF}}}%
    }[\PackageError{omegaFuenf}{Undefined option to omegaFuenf: #1}{}]%
}
\newcommand{\druckeWeltFuenf}[5]{\ensuremath{#1\setzeAbstandZwischenLiteralen #2\setzeAbstandZwischenLiteralen #3\setzeAbstandZwischenLiteralen #4\setzeAbstandZwischenLiteralen #5}} %
\newcommand{\omegaAafuenf}[5]{\ensuremath{\druckeWeltFuenf{#1}{#2}{#3}{#4}{#5}}}
\newcommand{\omegaAbfuenf}[5]{\ensuremath{\druckeWeltFuenf{#1}{#2}{#3}{#4}{\ol{#5}}}}
\newcommand{\omegaBafuenf}[5]{\ensuremath{\druckeWeltFuenf{#1}{#2}{#3}{\ol{#4}}{#5}}}
\newcommand{\omegaBbfuenf}[5]{\ensuremath{\druckeWeltFuenf{#1}{#2}{#3}{\ol{#4}}{\ol{#5}}}}
\newcommand{\omegaCafuenf}[5]{\ensuremath{\druckeWeltFuenf{#1}{#2}{\ol{#3}}{#4}{#5}}}
\newcommand{\omegaCbfuenf}[5]{\ensuremath{\druckeWeltFuenf{#1}{#2}{\ol{#3}}{#4}{\ol{#5}}}}
\newcommand{\omegaDafuenf}[5]{\ensuremath{\druckeWeltFuenf{#1}{#2}{\ol{#3}}{\ol{#4}}{#5}}}
\newcommand{\omegaDbfuenf}[5]{\ensuremath{\druckeWeltFuenf{#1}{#2}{\ol{#3}}{\ol{#4}}{\ol{#5}}}}
\newcommand{\omegaEafuenf}[5]{\ensuremath{\druckeWeltFuenf{#1}{\ol{#2}}{#3}{#4}{#5}}}
\newcommand{\omegaEbfuenf}[5]{\ensuremath{\druckeWeltFuenf{#1}{\ol{#2}}{#3}{#4}{\ol{#5}}}}
\newcommand{\omegaFafuenf}[5]{\ensuremath{\druckeWeltFuenf{#1}{\ol{#2}}{#3}{\ol{#4}}{#5}}}
\newcommand{\omegaFbfuenf}[5]{\ensuremath{\druckeWeltFuenf{#1}{\ol{#2}}{#3}{\ol{#4}}{\ol{#5}}}}
\newcommand{\omegaGafuenf}[5]{\ensuremath{\druckeWeltFuenf{#1}{\ol{#2}}{\ol{#3}}{#4}{#5}}}
\newcommand{\omegaGbfuenf}[5]{\ensuremath{\druckeWeltFuenf{#1}{\ol{#2}}{\ol{#3}}{#4}{\ol{#5}}}}
\newcommand{\omegaHafuenf}[5]{\ensuremath{\druckeWeltFuenf{#1}{\ol{#2}}{\ol{#3}}{\ol{#4}}{#5}}}
\newcommand{\omegaHbfuenf}[5]{\ensuremath{\druckeWeltFuenf{#1}{\ol{#2}}{\ol{#3}}{\ol{#4}}{\ol{#5}}}}
\newcommand{\omegaIafuenf}[5]{\ensuremath{\druckeWeltFuenf{\ol{#1}}{#2}{#3}{#4}{#5}}}
\newcommand{\omegaIbfuenf}[5]{\ensuremath{\druckeWeltFuenf{\ol{#1}}{#2}{#3}{#4}{\ol{#5}}}}
\newcommand{\omegaJafuenf}[5]{\ensuremath{\druckeWeltFuenf{\ol{#1}}{#2}{#3}{\ol{#4}}{#5}}}
\newcommand{\omegaJbfuenf}[5]{\ensuremath{\druckeWeltFuenf{\ol{#1}}{#2}{#3}{\ol{#4}}{\ol{#5}}}}
\newcommand{\omegaKafuenf}[5]{\ensuremath{\druckeWeltFuenf{\ol{#1}}{#2}{\ol{#3}}{#4}{#5}}}
\newcommand{\omegaKbfuenf}[5]{\ensuremath{\druckeWeltFuenf{\ol{#1}}{#2}{\ol{#3}}{#4}{\ol{#5}}}}
\newcommand{\omegaLafuenf}[5]{\ensuremath{\druckeWeltFuenf{\ol{#1}}{#2}{\ol{#3}}{\ol{#4}}{#5}}}
\newcommand{\omegaLbfuenf}[5]{\ensuremath{\druckeWeltFuenf{\ol{#1}}{#2}{\ol{#3}}{\ol{#4}}{\ol{#5}}}}
\newcommand{\omegaMafuenf}[5]{\ensuremath{\druckeWeltFuenf{\ol{#1}}{\ol{#2}}{#3}{#4}{#5}}}
\newcommand{\omegaMbfuenf}[5]{\ensuremath{\druckeWeltFuenf{\ol{#1}}{\ol{#2}}{#3}{#4}{\ol{#5}}}}
\newcommand{\omegaNafuenf}[5]{\ensuremath{\druckeWeltFuenf{\ol{#1}}{\ol{#2}}{#3}{\ol{#4}}{#5}}}
\newcommand{\omegaNbfuenf}[5]{\ensuremath{\druckeWeltFuenf{\ol{#1}}{\ol{#2}}{#3}{\ol{#4}}{\ol{#5}}}}
\newcommand{\omegaOafuenf}[5]{\ensuremath{\druckeWeltFuenf{\ol{#1}}{\ol{#2}}{\ol{#3}}{#4}{#5}}}
\newcommand{\omegaObfuenf}[5]{\ensuremath{\druckeWeltFuenf{\ol{#1}}{\ol{#2}}{\ol{#3}}{#4}{\ol{#5}}}}
\newcommand{\omegaPafuenf}[5]{\ensuremath{\druckeWeltFuenf{\ol{#1}}{\ol{#2}}{\ol{#3}}{\ol{#4}}{#5}}}
\newcommand{\omegaPbfuenf}[5]{\ensuremath{\druckeWeltFuenf{\ol{#1}}{\ol{#2}}{\ol{#3}}{\ol{#4}}{\ol{#5}}}}
\newtheorem*{genericpostulate}{\thistheoremname}
\newtheorem{lemma}{Lemma}
\title{Inference with System W Satisfies Syntax Splitting}
\author{%
Jonas Haldimann\and
Christoph Beierle %
\affiliations
FernUniversität in Hagen, 58084 Hagen, Germany\\
\emails
\{jonas.haldimann, christoph.beierle\}@fernuni-hagen.de
}
\begin{document}

\maketitle

\begin{abstract}
	In this paper, we investigate inductive inference with system W from conditional belief bases with respect to syntax splitting.
The concept of syntax splitting for inductive inference states that inferences about independent parts of the signature should not affect each other.
This was captured in work by Kern-Isberner, Beierle, and Brewka in the form of postulates for inductive inference operators expressing syntax splitting as a combination of relevance and independence; it was also shown that c-inference fulfils syntax splitting, while system P inference and system Z both fail to satisfy it.
  System W is a recently introduced inference system for nonmonotonic reasoning that captures and properly extends system Z %
  as well as c-inference.
We show that system W fulfils the syntax splitting postulates for inductive inference operators by showing that it satisfies the required properties of relevance and independence.
This makes system W another inference operator besides c-inference that fully complies with syntax splitting, while in contrast to c-inference, also extending rational closure.
\end{abstract}

\section{Introduction}
\label{sec_introduction}

An important subject in the field of knowledge representation and reasoning is the reasoning with conditional knowledge \cite{LehmannMagidor92short}.
A conditional  formalizes a defeasible rule ``If \(A\) then usually \(B\)'' for logical formulas \(A, B\), in the following denoted as \(\satzCL{B}{A}\).
Two well known inference
methods for
conditional belief bases consisting of such conditionals
are p-entailment
that is characterized by the axioms of System P \cite{Adams1965,KrausLehmannMagidor1990} 
and system Z \cite{Pearl1990systemZTARK,GoldszmidtPearl1996}.
Newer approaches include
inference with c-representations \cite{KernIsberner2001,KernIsberner2004AMAI}, 
skeptical c-inference
taking all c-representations
into account
\cite{BeierleEichhornKernIsbernerKutsch2018AMAI}, and
the recently introduced system W \cite{KomoBeierle2022AMAI}.

While all reasoning approaches cited above satisfy the axioms of system P,
called the ``industry standard'' for qualitative nonmonotonic inference
\cite{HawthorneMakinson2007StudiaLogica},
there are differences among them with respect to other properties.
This also applies to
the highly desirable property of syntax splitting for nonmonotonic reasoning.
The concept of syntax splitting was originally developed by Parikh \cite{Parikh99} for belief sets in order 
to formulate a postulate for belief revision stating that
the revision with a formula that contains only variables from one of part of the signature should only affect the information about that part of the signature.
The notion of syntax splitting was later extended, e.g. \cite{PeppasWilliamsChopraFoo2015,Kern-IsbernerBrewka17}.
In \cite{KernIsbernerBeierleBrewka2020KR}, syntax splitting is introduced for nonmonotic resoning as a combination of \emph{relevance} and \emph{indpendence}, stating that
only conditionals from a considered part of the syntax splitting of  a belief base are relevant for corresponding inferences, and that inferences using only atoms from one part of the syntax splitting should be independent of the other parts.
It is shown that c-inference
fulfils syntax splitting, while
system P and system Z both fail to satisfy it
\cite{KernIsbernerBeierleBrewka2020KR}.

System W has been shown \cite{KomoBeierle2022AMAI} to exhibit  high-quality properties like
capturing and properly extending 
p-entailment,
system Z,
and c-inference,
or avoiding the drowning problem
\cite{Pearl1990systemZTARK,BenferhatCayrolDuboisLangPrade1993}.
In this paper, we show that system W also satisfies the required properties of relevance and
independence, making it another inference operator, besides c-inference, to fully
comply with the highly desirable property of syntax splitting. Furthermore, %
system W %
also extends,
in contrast to c-inference, rational closure
and thus inheriting its desirable properties \cite{LehmannMagidor92short}.

After briefly recalling the needed basics of conditional logic in Sec.~\ref{sec_background},
the syntax splitting postulates are given in Sec.~\ref{sec_synsplit}.
In Sec.~\ref{sec:systemW_preferred_structure}, we present an syntax splitting example
and     illustrate how system W handles it,
and Sec.~\ref{sec:systemW_splitting} shows that system W satisfies syntax splitting.
Sec.~\ref{sec:conclusion} concludes and points out further work.

\section{Reasoning with Conditional Logic}
\label{sec_background}

A \emph{(propositional) signature} is a finite set \(\Sigma\) of identifiers.
For a signature \(\Sigma\), we denote the propositional language over \(\Sigma\) by \(\LpropSig{\Sigma}\).
Usually, we denote elements of the signatures with lowercase letters \(a, b, c, \dots\) and formulas with uppercase letters \(A, B, C, \dots\).
We may denote a conjunction \(A \wedge B\) by \(AB\) and a negation \(\neg A\) by \(\ol{A}\) for brevity of notation.
The set of interpretations over  a signature \(\Sigma\) is denoted as \(\UnivSig{\Sigma}\).
Interpretations are also called \emph{worlds}. %
An interpretation \(\omega \in \UnivSig{\Sigma}\) is a \emph{model} of a formula \(A \in \LpropSig{\Sigma}\) if \(A\) holds in \(\omega\). This is denoted as \(\omega \models A\).
The set of models of a formula (over a signature \(\Sigma\)) is denoted as \(\Mod_\Sigma(A) = \{\omega \in \UnivSig{\Sigma} \mid \omega \models A\}\).
A formula \(A\) \emph{entails} a formula \(B\) if \(\Mod_\Sigma(A) \subseteq \Mod_\Sigma(B)\).

Worlds over (sub-)signatures can be merged or marginalized.
	Let \(\Sigma\) be a signature with disjunct sub-signatures \(\Sigma_1, \Sigma_2\) such that \(\Sigma = \Sigma_1 \cup \Sigma_2\).
	Let \(\omega_1 \in \Omega_{\Sigma_1}\) and \(\omega_2 \in \Omega_{\Sigma_2}\).
	Then
	\(
	(\omega_1 \cdot \omega_2)
	\)
	denotes the world from \(\Omega_\Sigma\) that assigns the truth values for variables in \(\Sigma_1\) as \(\omega_1\) and truth values for variables in \(\Sigma_2\) as \(\omega_2\).
	For \(\omega \in \Omega_\Sigma\), the world from \(\Omega_{\Sigma_1}\) that assigns the truth values for variables in \(\Sigma_1\) as \(\omega\) is denoted as \(\worldMargin{\omega}{\Sigma_1}\).

A \emph{conditional} \(\satzCL{B}{A}\) connects two formulas \(A, B\) and represents the rule ``If \(A\) then usually \(B\)''.
The conditional language over a signature \(\Sigma\) is denoted as \(\LcondSig{\Sigma} = \{\satzCL{B}{A} \mid A, B \in \LpropSig{\Sigma}\}\).
A finite set of conditionals is called a \emph{(conditional) belief base} \(\Delta\).

We use a three-valued semantics of conditionals in this paper \cite{deFinetti37orig}.
For a world \(\omega\) a conditional \(\satzCL{B}{A}\) is either \emph{verified} by \(\omega\) if \(\omega \models AB\), \emph{falsified} by \(\omega\) if \(\omega \models A\ol{B}\), or \emph{not applicable} to \(\omega\) if \(\omega \models \ol{A}\).

Reasoning with conditionals is often modelled by inference relations.
An \emph{inference relation} is a binary relation \(\nmableit\) on formulas over an underlying signature \(\Sigma\) with the intuition that \(A \nmableit B\) means that \(A\) (plausibly) entails \(B\).
(Non-monotonic) inference is closely related to conditionals: an inference relation \(\nmableit\) can also be seen as a set of conditionals \(\{\satzCL{B}{A} \mid A, B \in \LpropSig{\Sigma}, A \nmableit B\}\).

The following definition formalizes the inductive completion of a belief base according to an inference method.
\begin{definition}[inductive inference operator \cite{KernIsbernerBeierleBrewka2020KR}]
An \emph{inductive inference operator} is a mapping \(C: \Delta \mapsto \nmDelta\) that maps each belief base to an inference relation such that direct inference  (DI) and trivial vacuity (TV) are fulfilled, i.e.,
\begin{description}
	\item[(DI)] if \(\satzCL{B}{A} \in \Delta\) then \(A \nmDelta B\) and
	\item[(TV)] if \(\Delta = \emptyset\) and \(A \nmDelta B\) then \(A \models B\).
\end{description}
\end{definition}

Examples for inductive inference operators are p-entailment \cite{Adams1965} and system Z \cite{Pearl1990systemZTARK}.
\section{Syntax Splitting for Inductive Inference}
\label{sec_synsplit}

First, we recall the notion of syntax splitting for belief bases. %

\begin{definition}[syntax splitting for belief bases (adapted from \cite{KernIsbernerBeierleBrewka2020KR})]
	Let \(\Delta\) be a belief base over a signature \(\Sigma\).
	A partitioning \(\{\Sigma_1, \dots, \Sigma_n\}\) of \(\Sigma\) is a \emph{syntax splitting} for \(\Delta\) if there is a partitioning \(\{\Delta_1, \dots, \Delta_n\}\) of \(\Delta\) such that \(\Delta_i \subseteq \LcondSig{\Sigma_i}\) for every \(i = 1, \dots, n\).
	A syntax splitting \(\{\Sigma_1, \Sigma_2\}\) of \(\Delta\) with two parts and corresponding partition \(\{\Delta_1, \Delta_2\}\) of \(\Delta\) is denoted as
	\[
	\Delta = \Delta_1 \splitcup \Delta_2.
	\]
\end{definition}

Here, we will focus on syntax splittings in two sub-signatures.
Results for belief bases with syntax splittings in more than two parts can be obtained by iteratively applying the postulates presented here.

For belief bases with syntax splitting,
the postulate \relevance\ describes that conditionals corresponding to one part of the syntax splitting do not have any influence on inferences that only use the other part of the syntax splitting, i.e., that only conditionals from the considered part of the syntax splitting are relevant.

\begin{description}
	\item[\textbf{\relevance}] An  inductive inference operator \(C: \Delta \mapsto \nmDelta\) satisfies \textbf{\relevance} \cite{KernIsbernerBeierleBrewka2020KR} if for any \(\Delta = \Delta_1 \splitcup \Delta_2\),
	and for any \(A, B \in \LpropSig{\Sigma_i}\) for \(i =1, 2\) we have that
	\begin{equation}
		\label{eq_rel_qual}
		A \nmDelta B \quad \mbox{iff} \quad A \nmany{\Delta_i}{} B.
	\end{equation}
\end{description}

The postulate \indep\ describes that inferences should not be affected by beliefs in formulas over other sub-signatures in the splitting,
i.e., inferences using only atoms from one part of the syntax splitting should be drawn independently of beliefs about other parts of the splitting.

\begin{description}
	\item[\textbf{\indep}]
	An  inference operator \(C: \Delta \mapsto \nmDelta\) satisfies \textbf{\indep} \cite{KernIsbernerBeierleBrewka2020KR} if for any $\Delta = \Delta_1 \splitcup \Delta_2$, and for any \(A, B \in \LpropSig{\Sigma_i}\), \(D \in \LpropSig{\Sigma_j}\) for 
	\(i, j \in \{1, 2\}, i \neq j\)
	such that \(D\) is consistent, we have
	\begin{equation}
		\label{eq_ind_qual}
		A \nmDelta B \quad \mbox{iff} \quad AD \nmDelta B. 
	\end{equation}
\end{description}
Syntax splitting is the combination of \relevance\ and \indep:
\begin{description}
	\item[\textbf{\splitpost}]
	An inductive inference operator satisfies \textbf{\splitpost} \cite{KernIsbernerBeierleBrewka2020KR} if it satisfies \relevance\ and \indep.
\end{description}

Among the inductive inference operators investigated in \cite{KernIsbernerBeierleBrewka2020KR}, only reasoning with c-representations satisfies \splitpost.
\section{System W} %
\label{sec:systemW_preferred_structure}

Recently, system W has been introduced as a new inductive inference operator \cite{KomoBeierle2022AMAI}.
	System W takes into account both the tolerance information expressed by the 
	ordered partition of \(\Delta\) and the structural information which conditionals are falsified.
	
\begin{definition}[inclusion maximal tolerance partition \cite{Pearl1990systemZTARK}]
	A conditional \(\satzCL{B}{A}\) is \emph{tolerated} by \(\Delta\) if there exists a world \(\omega \in \Omega_\Sigma\) such that \(\omega\) verifies \(\satzCL{B}{A}\) and \(\omega\) does not falsify any conditional in \(\Delta\).
	The inclusion maximal \emph{tolerance partition} \(\fctOP{\Delta} = (\Delta^0, \dots, \Delta^k)\) of a consistent belief base \(\Delta\) is defined as follows.
	The first set \(\Delta^0\) in the tolerance partitioning contains all conditionals from \(\Delta\) that are tolerated by \(\Delta\).
	Analogously, \(\Delta^i\) contains all conditionals from \(\Delta\backslash (\bigcup_{j < i} \Delta^{j})\) which are tolerated by \(\Delta\backslash (\bigcup_{j < i} \Delta^{j})\), until \(\Delta\backslash (\bigcup_{j < k+1} \Delta^{j}) = \emptyset\). %
\end{definition}

It is well-known that \(\fctOP{\Delta}\) exists iff  \(\Delta\) is consistent; moreover,
because the \(\Delta^i\) are chosen inclusion-maximal, the tolerance partitioning is unique \cite{Pearl1990systemZTARK}.

	\begin{definition}[$\falsW^j$, $\falsW$, preferred structure \worderDelta\ on worlds \cite{KomoBeierle2022AMAI}]
		\label{def_w_structure_worlds}
		Consider a consistent belief base \(\Delta = \{ r_i=(B_i|A_i) \mid i \in\{1, \ldots, n\} \}\)
		with the tolerance partition \(\fctOP{\Delta} = (\Delta^1, \dots, \Delta^k)\).
		For $j = 0,\dots ,k$, the functions $\falsW^j$ and $\falsW$
		are the \emph{functions mapping worlds to the set of falsified conditionals}
		from the set $\Delta^j$ in the tolerance partition and from \(\Delta\), respectively, given by
		\begin{align}
			\falsW^j (\omega) &:= \{ r_i \in \Delta^ j \mid \omega \models A_i \overline{B_i}  \},\label{eq_mapping_f_j} \\
			\label{eq_mapping_f}
			\falsW(\omega) &:=  \{ r_i \in \Delta \mid \omega \models A_i \overline{B_i }  \}.
		\end{align}
		The \emph{\namePS on worlds} is given by the binary relation
		${\worderDelta} \subseteq \Omega \times \Omega$ 
		defined by, for any $\omega, \omega' \in \Omega$,
		\begin{align}
			\nonumber
			\omega \worderDelta \omega'  \thickspace\thickspace \text{iff} \thickspace\thickspace &\textrm{there exists $m \in \{0\, , \ldots \, , k \}$ such that }\\
			\nonumber
			&\falsW^i(\omega)  = \falsW^i(\omega') \quad \forall i  \in  \{  m + 1 \, , \ldots \, , k  \}, \,\,\textrm{and}\\
			\label{eq_sz_structure}
			& \falsW^m(\omega) \subsetneqq  \falsW^m(\omega') \, . 
		\end{align}
	\end{definition}
	
	Thus, $\omega \worderDelta \omega'$ if and only if $\omega $ falsifies strictly less conditionals than $\omega'$ in the partition with the biggest index $m$  where the conditionals falsified by $\omega $ and $\omega'$ differ. 
Note, that \worderDelta\ is a strict partial order.
	The inductive inference operator system W based on \worderDelta\ is defined as follows.
	
	\begin{definition}[system W, \wableitDelta \cite{KomoBeierle2022AMAI}]
		\label{def_sz_inference}
		Let \(\Delta\) be a belief base and \(A, B\) be formulas.
		Then $B$ is a  \emph{\systemSZ inference from $A$ (in the context of \(\Delta\))}, denoted \(A \wableitDelta B \) if
		for every \(\omega' \in \Omega_{A\overline{B}}\) there is an \(\omega \in \Omega_{A B}\) such that \(\omega \worderDelta \omega '\).
	\end{definition}

        System W extends system Z and c-inference and enjoys further desirable properties for nonmonotonic reasoning like avoiding the drowning problem. For more information on system W we refer to \cite{KomoBeierle2022AMAI}.
We         
illustrate system W with an example.

\begin{example}
	\label{ex:synsplit_different_inf_op}
	Consider the belief base \(\Delta = \{\satzCL{f}{b}, \satzCL{\ol{v}}{d}, \allowbreak \satzCL{b}{p}, \satzCL{\ol{f}}{p}\}\) over the signature \(\Sigma = \{b, p, f, v, d\}\)
        from \cite[Example~2]{KernIsbernerBeierleBrewka2020KR}
          with the intended meanings
          birds ($b$),
          penguins ($p$),
          being able to fly ($f$),
  being visible in the night ($v$),
          dark objects ($d$). 
The preferred structure \worderDelta\ on worlds is given in in Figure~\ref{fig_example_systemw}.

	We have $\Delta = \Delta_1 \splitcup \Delta_2$ with $\Sigma_1 = \{b, p, f\}\), \(\Sigma_2 = \{v, d \}$ and 
	$\Delta_1 = \{(f|b), (b|p), (\ol{f} | p)\}$,  
	$\Delta_2 = \{(\ol{v}|d)\}$. 

	The conditional \(\satzCL{\ol{v}}{d}\) can be deduced from \(\Delta\) with 
	every inductive inference operator
	because of (DI).
	But the conditional \(\satzCL{\ol{v}}{dp}\) cannot be deduced from \(\Delta\) with either p-entailment and System Z;
	in both cases, the additional information \(p\) from an independent part of the signature prevents the deduction of \(\neg v\).
	Therefore, p-entailment and system Z do not fulfil \splitpost.
        Using the preferred structure \worderDelta\
        given in Figure~\ref{fig_example_systemw},
        it is straightforward to verify that for each world \(\omega'\) with \(\omega' \models dpv\) there is a world \(\omega\) with \(\omega \models dp\ol{v}\) such that \(\omega \worderDelta \omega'\). Thus, system W licences the inference \(dp \wableitDelta \ol{v}\),
        complying with \splitpost.
\end{example}

  \begin{figure}%
\renewcommand{\ALvariableE}{\ensuremath{\mathit{b}}}
\renewcommand{\ALvariableZ}{\ensuremath{\mathit{p}}}
\renewcommand{\ALvariableD}{\ensuremath{\mathit{f}}}
\renewcommand{\ALvariableV}{\ensuremath{\mathit{v}}}
\renewcommand{\ALvariableF}{\ensuremath{\mathit{d}}}
\renewcommand{\druckeWeltFuenf}[5]{\ensuremath{\mathit{#1\setzeAbstandZwischenLiteralen\hspace{-0.5pt}#2\setzeAbstandZwischenLiteralen\hspace{-1pt}#3\setzeAbstandZwischenLiteralen\hspace{-1pt}#4\setzeAbstandZwischenLiteralen\hspace{-1pt}#5}}} %
  \centering
  \footnotesize%
  \begin{tikzpicture}
  	\tikzstyle{world}=[inner sep=.5mm,]
  	\tikzstyle{conn} = [->]
  	\tikzstyle{conn1} = []
  	
  	\def\xdiff{1.4}
  	\def\txdiff{.9}
  	\def\ydiff{0.85}
  	
  	\node (11) at (0,0) [world] {\omegaFuenf{17}};
  	\node (21) at (-\xdiff,-\ydiff) [world] {\omegaFuenf{20}};
  	\node (22) at (0,-.8*\ydiff) [world] {\omegaFuenf{18}};
  	\node (23) at (\xdiff,-\ydiff) [world] {\omegaFuenf{19}};
  	\node (31) at (-1.5*\xdiff,-2*\ydiff) [world] {\omegaFuenf{01}};
  	\node (32) at (1.5*\xdiff,-2*\ydiff) [world] {\omegaFuenf{21}};
  	\node (41) at (-2.5*\xdiff,-3*\ydiff) [world] {\omegaFuenf{02}};
  	\node (42) at (-1.5*\xdiff,-2.9*\ydiff) [world] {\omegaFuenf{03}};
  	\node (43) at (-.5*\xdiff,-2.8*\ydiff) [world] {\omegaFuenf{04}};
  	\node (44) at (.5*\xdiff,-2.8*\ydiff) [world] {\omegaFuenf{22}};
  	\node (45) at (1.5*\xdiff,-2.9*\ydiff) [world] {\omegaFuenf{23}};
  	\node (46) at (2.5*\xdiff,-3*\ydiff) [world] {\omegaFuenf{24}};
  	\node (51) at (-.75*\xdiff,-3.9*\ydiff) [world] {\omegaFuenf{05}};
  	\node (52) at (.75*\xdiff,-3.9*\ydiff) [world] {\omegaFuenf{13}};
  	\node (61) at (-4*\txdiff,-4.9*\ydiff) [world] {\omegaFuenf{14}};
  	\node (62) at (-3*\txdiff,-5.1*\ydiff) [world] {\omegaFuenf{09}};
  	\node (63) at (-2*\txdiff,-5.2*\ydiff) [world] {\omegaFuenf{29}};
  	\node (64) at (-1*\txdiff,-5.2*\ydiff) [world] {\omegaFuenf{15}};
  	\node (65) at (0*\txdiff,-5.2*\ydiff) [world] {\omegaFuenf{08}};
  	\node (66) at (1*\txdiff,-5.2*\ydiff) [world] {\omegaFuenf{16}};
  	\node (67) at (2*\txdiff,-5.2*\ydiff) [world] {\omegaFuenf{06}};
  	\node (68) at (3*\txdiff,-5.1*\ydiff) [world] {\omegaFuenf{25}};
  	\node (69) at (4*\txdiff,-4.9*\ydiff) [world] {\omegaFuenf{07}};
  	\node (71) at (-4*\txdiff,-7*\ydiff) [world] {\omegaFuenf{26}};
  	\node (72) at (-3*\txdiff,-7*\ydiff) [world] {\omegaFuenf{10}};
  	\node (73) at (-2*\txdiff,-7*\ydiff) [world] {\omegaFuenf{30}};
  	\node (74) at (-1*\txdiff,-7*\ydiff) [world] {\omegaFuenf{11}};
  	\node (75) at (0*\txdiff,-7*\ydiff) [world] {\omegaFuenf{12}};
  	\node (76) at (1*\txdiff,-7*\ydiff) [world] {\omegaFuenf{27}};
  	\node (77) at (2*\txdiff,-7*\ydiff) [world] {\omegaFuenf{28}};
  	\node (78) at (3*\txdiff,-7*\ydiff) [world] {\omegaFuenf{31}};
  	\node (79) at (4*\txdiff,-7*\ydiff) [world] {\omegaFuenf{32}};

  	\draw[conn] (21) -- (11);
  	\draw[conn] (22) -- (11);
  	\draw[conn] (23) -- (11);
  	\draw[conn] (31) -- (21);
  	\draw[conn] (32) -- (21);
  	\draw[conn] (31) -- (22);
  	\draw[conn] (32) -- (22);
  	\draw[conn] (31) -- (23);
  	\draw[conn] (32) -- (23);
  	\draw[conn] (41) -- (31);
  	\draw[conn] (42) -- (31);
  	\draw[conn] (43) -- (31);
  	\draw[conn] (44) -- (32);
  	\draw[conn] (45) -- (32);
  	\draw[conn] (46) -- (32);
  	\draw[conn] (51) -- (41);
  	\draw[conn] (51) -- (42);
  	\draw[conn] (51) -- (43);
  	\draw[conn] (51) -- (44);
  	\draw[conn] (51) -- (45);
  	\draw[conn] (51) -- (46);
  	\draw[conn] (52) -- (41);
  	\draw[conn] (52) -- (42);
  	\draw[conn] (52) -- (43);
  	\draw[conn] (52) -- (44);
  	\draw[conn] (52) -- (45);
  	\draw[conn] (52) -- (46);
  	\draw[conn] (61) -- (51);
  	\draw[conn] (62) -- (51);
  	\draw[conn] (63) -- (51);
  	\draw[conn] (64) -- (51);
  	\draw[conn] (65) -- (51);
  	\draw[conn] (66) -- (51);
  	\draw[conn] (67) -- (51);
  	\draw[conn] (68) -- (51);
  	\draw[conn] (69) -- (51);
  	\draw[conn] (61) -- (52);
  	\draw[conn] (62) -- (52);
  	\draw[conn] (63) -- (52);
  	\draw[conn] (64) -- (52);
  	\draw[conn] (65) -- (52);
  	\draw[conn] (66) -- (52);
  	\draw[conn] (67) -- (52);
  	\draw[conn] (68) -- (52);
  	\draw[conn] (69) -- (52);
  	\node(67center) at (0,-6.1*\ydiff) [circle,draw] {};
  	\draw[conn1] (71) -- (67center);
  	\draw[conn1] (72) -- (67center);
  	\draw[conn1] (73) -- (67center);
  	\draw[conn1] (74) -- (67center);
  	\draw[conn1] (75) -- (67center);
  	\draw[conn1] (76) -- (67center);
  	\draw[conn1] (77) -- (67center);
  	\draw[conn1] (78) -- (67center);
  	\draw[conn1] (79) -- (67center);
  	\draw[conn] (67center) to [bend left=15] (61);
  	\draw[conn] (67center) to [bend left=10] (62);
  	\draw[conn] (67center) to [bend left=10] (63);
  	\draw[conn] (67center) to [bend left=10] (64);
  	\draw[conn] (67center) -- (65);
  	\draw[conn] (67center) to [bend right=10] (66);
  	\draw[conn] (67center) to [bend right=10] (67);
  	\draw[conn] (67center) to [bend right=10] (68);
  	\draw[conn] (67center) to [bend right=15] (69);
  \end{tikzpicture}
\caption{The preferred structure on worlds induced by the belief base \(\Delta\) from Example~\ref{ex:synsplit_different_inf_op}. Edges that can be obtained from transitivity are omitted for lucidity.}
\label{fig_example_systemw}
\end{figure}

While 
Example~\ref{ex:synsplit_different_inf_op} is only an example of syntax splitting with system W, we will show that system W satisfies \splitpost\
and thus respects all syntax splittings.
\section{System W fulfils Syntax Splitting}
\label{sec:systemW_splitting}

In this section, we will evaluate system W with respect to the syntax splitting postulates.

For proving that system W fulfils syntax splitting, we first present four lemmas on the properties of \worderDelta\ in the presence of a syntax splitting \(\Delta = \Delta_1 \splitcup \Delta_2\).
Note, that we consider the belief bases \(\Delta_1, \Delta_2\) as belief bases over the signature \(\Sigma = \Sigma_1 \cup \Sigma_2\) in this section.
Thus, in particular
  \(\worder{\Delta_1}\) and \(\worder{\Delta_2}\) are relations on \(\Omega_\Sigma\) and the inference relations induced by \(\Delta_1, \Delta_2\) are calculated with respect to \(\Sigma\).

The following Lemma~\ref{lem:tolerance_partition_splitting} shows how a syntax splitting on a belief base carries over to the corresponding inclusion maximal tolerance partitioning.
\begin{lemma}
	\label{lem:tolerance_partition_splitting}
	Let \(\Delta = \Delta_1 \splitcup \Delta_2\) be a consistent belief base with syntax splitting.
	Let  \(\fctOP{\Delta} = (\Delta^0, \dots, \Delta^k)\) be the inclusion maximal tolerance partitioning of \(\Delta\).
	Let \(\fctOP{\Delta_i} = (\Delta^0_i, \dots, \Delta^{l_i}_i)\) be the inclusion maximal tolerance partition of \(\Delta_i\) for \(i=1, 2\).
	\begin{enumerate}
		\item For \(i=1,2\) and \(j=0, \dots, l_i\) we have \(\Delta^j_i = \Delta^j \cap \Delta_i\) and thus especially \(\Delta^j_i \subseteq \Delta^j\). 
		\item \(\max \{l_1, l_2\} = k\)
		\item If \(l_1 \leq l_2\), then
		\(
		\Delta^j = \begin{cases}
			\Delta_1^j \cup \Delta_2^j & \text{for } j =1, \dots, l_1 \\
			\Delta_2^j & \text{for } j = l_1+1, \dots, k.
		\end{cases}
		\)
	\end{enumerate}
\end{lemma}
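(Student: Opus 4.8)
The plan is to reduce everything to a single \emph{separation property} of toleration and then run an induction along the partition levels. The structural fact underlying the whole argument is that, since $\Sigma_1$ and $\Sigma_2$ are disjoint and each conditional of $\Delta_i$ is formulated over $\Sigma_i$, whether a world $\omega = (\omega_1 \cdot \omega_2)$ verifies, falsifies, or is inapplicable to a conditional from $\Delta_i$ depends only on the marginal $\worldMargin{\omega}{\Sigma_i} = \omega_i$. First I would record the auxiliary fact that every consistent belief base $T$ admits a world falsifying none of its conditionals: take a ranking model $\kappa$ of $T$ and any world of minimal rank $0$; if it falsified some $\satzCL{D}{C} \in T$ we would need $\kappa(CD) < \kappa(C\ol{D}) = 0$, which is impossible. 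Since $\Delta = \Delta_1 \splitcup \Delta_2$ is consistent, so are $\Delta_1$, $\Delta_2$ and all their subsets (a ranking model of $\Delta$ restricts to one of any subset), so this fact is available for the relevant $\Sigma_i$-parts.

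The central tool is a \textbf{Separation Lemma}: for $S_1 \subseteq \LcondSig{\Sigma_1}$ and $S_2 \subseteq \LcondSig{\Sigma_2}$ with $S_1, S_2$ consistent, a conditional $r \in S_1$ is tolerated by $S_1 \cup S_2$ iff $r$ is tolerated by $S_1$, and symmetrically for $r \in S_2$. For the forward direction, a witnessing world $\omega = (\omega_1 \cdot \omega_2)$ for toleration by $S_1 \cup S_2$ projects to $\worldMargin{\omega}{\Sigma_1} = \omega_1$, which by the structural fact verifies $r$ and falsifies nothing in $S_1$. For the converse, I would combine a witness $\omega_1$ for toleration by $S_1$ with a world $\omega_2$ falsifying nothing in $S_2$ (auxiliary fact); the merge $(\omega_1 \cdot \omega_2)$ then verifies $r$ and falsifies nothing in $S_1 \cup S_2$.

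With this, Part~1 follows by induction on $j$. The inductive hypothesis is that $\Delta^{j'} \cap \Delta_i = \Delta_i^{j'}$ for all $j' < j$ and $i = 1,2$; this forces the remaining sets to agree, i.e.\ $\Delta \setminus \bigcup_{j' < j} \Delta^{j'} = R_1 \cup R_2$ where $R_i = \Delta_i \setminus \bigcup_{j' < j} \Delta_i^{j'}$ is exactly the remaining set in the partition of $\Delta_i$ at level $j$. By definition $\Delta^j$ consists of the conditionals of $R_1 \cup R_2$ tolerated by $R_1 \cup R_2$; applying the Separation Lemma with $S_1 = R_1$, $S_2 = R_2$ (both consistent as subsets of the $\Delta_i$) to each $\Sigma_1$-conditional, and symmetrically to each $\Sigma_2$-conditional, yields $\Delta^j \cap \Delta_1 = \Delta_1^j$ and $\Delta^j \cap \Delta_2 = \Delta_2^j$, closing the induction; the inclusion $\Delta_i^j \subseteq \Delta^j$ is then immediate.

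Finally I would read off Parts~2 and~3 as bookkeeping. Since every genuine block of a tolerance partition is nonempty, $\Delta^j \neq \emptyset$ iff $R_1 \cup R_2 \neq \emptyset$ iff $j \le l_1$ or $j \le l_2$; hence the last nonempty block of $\Delta$ sits at index $\max\{l_1, l_2\}$, giving $k = \max\{l_1, l_2\}$. For Part~3 with $l_1 \le l_2$, Part~1 distributes all of $\Delta_1$ among $\Delta^0, \dots, \Delta^{l_1}$, so $\Delta^j \cap \Delta_1 = \emptyset$ for $j > l_1$; combining this with $\Delta^j = (\Delta^j \cap \Delta_1) \cup (\Delta^j \cap \Delta_2) = \Delta_1^j \cup \Delta_2^j$ gives $\Delta^j = \Delta_1^j \cup \Delta_2^j$ for $j \le l_1$ and $\Delta^j = \Delta_2^j$ for $l_1 < j \le k$. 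The main obstacle is the converse direction of the Separation Lemma, namely guaranteeing that a $\Sigma_1$-witness can be completed over $\Sigma_2$ without creating falsifications; this is exactly where consistency of the remaining $\Sigma_2$-part (and hence the falsify-nothing world) is needed, and one must take care that the induction keeps the remaining sets of $\Delta$ and of the $\Delta_i$ synchronized level by level.
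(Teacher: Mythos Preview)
The paper does not include a proof of this lemma; the four auxiliary lemmas in Section~\ref{sec:systemW_splitting} are stated without proof (this is a short paper and the proofs are presumably deferred). So there is no paper proof to compare against; I can only assess correctness.

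Your argument is correct and is the natural way to establish the result. The Separation Lemma is exactly the right pivot: because verification and falsification of a $\Sigma_i$-conditional depend only on the $\Sigma_i$-marginal of a world, toleration of such a conditional by $R_1 \cup R_2$ reduces to toleration by $R_i$, provided the other side $R_{3-i}$ admits a world falsifying none of its conditionals. You secure that via consistency (any ranking model has rank-$0$ worlds, and these falsify nothing), and since every $R_i$ arising in the induction is a subset of the consistent base $\Delta$, this is available throughout. The induction then keeps the ``remaining'' sets of $\Delta$ and of the $\Delta_i$ synchronized, which is what makes Part~1 go through level by level. Parts~2 and~3 are indeed immediate bookkeeping once Part~1 is in hand; your derivation of $k=\max\{l_1,l_2\}$ implicitly uses that whenever $R_1 \cup R_2 \neq \emptyset$ the set $\Delta^j$ is nonempty, which follows from consistency of $R_1 \cup R_2$ (a subset of $\Delta$) and hence the existence of a tolerated conditional. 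One cosmetic point: in the forward direction of your Separation Lemma you do not actually need to pass to the marginal $\omega_1$; the same $\omega \in \Omega_\Sigma$ already witnesses toleration by $S_1 \subseteq S_1 \cup S_2$. This simplifies the write-up slightly but changes nothing of substance.
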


If we have \(\omega \worderDelta \omega'\), then there is some conditional \(r\) that falsifies \(\omega'\) but not \(\omega\) and thus causes the \(\subsetneqq\) relation in \eqref{eq_sz_structure} in Definition~\ref{def_w_structure_worlds}. If \(\Delta = \Delta_1 \splitcup \Delta_2\), this \(r\) is either in \(\Delta_1\) or in \(\Delta_2\). 
Lemma~\ref{lem:worder_split} states that the relation \(\omega \worder{\Delta} \omega'\) can also be obtained using only \(\Delta_1\) or only \(\Delta_2\).

\begin{lemma}
	\label{lem:worder_split}
	Let \(\Delta = \Delta_1 \splitcup \Delta_2\) %
	and
	let \(\omega, \omega' \in \Omega\).
	If \(\omega \worderDelta \omega'\), then \(\omega \worder{\Delta_1} \omega'\) or \(\omega \worder{\Delta_2} \omega'\). %
\end{lemma}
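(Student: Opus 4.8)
The plan is to exploit the fact that, because every conditional in \(\Delta_1\) mentions only variables from \(\Sigma_1\) and every conditional in \(\Delta_2\) only variables from \(\Sigma_2\), the falsification behaviour of a world decomposes cleanly along the splitting. Write \(\falsW_1^j\) and \(\falsW_2^j\) for the falsification functions associated with the tolerance partitions \(\fctOP{\Delta_1}\) and \(\fctOP{\Delta_2}\). Using Lemma~\ref{lem:tolerance_partition_splitting}(1), which gives \(\Delta_i^j = \Delta^j \cap \Delta_i\), I would first establish for every world \(\omega\) and every level \(j\) that
\[
\falsW_i^j(\omega) = \falsW^j(\omega) \cap \Delta_i, \qquad
\falsW^j(\omega) = \falsW_1^j(\omega) \dotcup \falsW_2^j(\omega),
\]
the latter a disjoint union since \(\{\Delta_1,\Delta_2\}\) partitions \(\Delta\).

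Next, assume \(\omega \worderDelta \omega'\) and let \(m\) be the witnessing level from Definition~\ref{def_w_structure_worlds}, so that \(\falsW^i(\omega) = \falsW^i(\omega')\) for all \(i > m\) and \(\falsW^m(\omega) \subsetneqq \falsW^m(\omega')\). Because the union above is disjoint and \(\Delta_1, \Delta_2\) are disjoint sets of conditionals, set equality and set inclusion between the \(\falsW\)-values transfer to each component separately by intersecting with \(\Delta_i\): from \(\falsW^i(\omega) = \falsW^i(\omega')\) I obtain \(\falsW_1^i(\omega) = \falsW_1^i(\omega')\) and \(\falsW_2^i(\omega) = \falsW_2^i(\omega')\) for all \(i > m\), and from the proper inclusion at level \(m\) I obtain \(\falsW_1^m(\omega) \subseteq \falsW_1^m(\omega')\) and \(\falsW_2^m(\omega) \subseteq \falsW_2^m(\omega')\) with at least one of the two inclusions being proper (a conditional witnessing \(\subsetneqq\) lies in exactly one of \(\Delta_1, \Delta_2\)).

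Finally, suppose without loss of generality that the proper inclusion occurs in the first component, i.e.\ \(\falsW_1^m(\omega) \subsetneqq \falsW_1^m(\omega')\); the second-component case is symmetric and yields \(\omega \worder{\Delta_2} \omega'\). I would then verify that \(m\) itself serves as a witness for \(\omega \worder{\Delta_1} \omega'\). The proper inclusion at level \(m\) forces \(\Delta_1^m = \Delta^m \cap \Delta_1 \neq \emptyset\), hence \(m \leq l_1\), so \(m\) is a legitimate level of \(\fctOP{\Delta_1}\). The required equalities \(\falsW_1^i(\omega) = \falsW_1^i(\omega')\) for \(i \in \{m+1,\dots,l_1\}\) follow from the component-wise equalities just derived, since \(l_1 \leq k\) by Lemma~\ref{lem:tolerance_partition_splitting}(2) and all these indices satisfy \(i > m\). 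Together with \(\falsW_1^m(\omega) \subsetneqq \falsW_1^m(\omega')\), this is exactly the condition defining \(\omega \worder{\Delta_1} \omega'\). I expect the only genuine obstacle to be the bookkeeping that aligns the level indices of \(\fctOP{\Delta}\) with those of \(\fctOP{\Delta_i}\) and guarantees the witnessing level lies in range; both points are settled directly by Lemma~\ref{lem:tolerance_partition_splitting}.
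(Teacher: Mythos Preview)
The paper does not include a proof of Lemma~\ref{lem:worder_split}; it states the lemma and proceeds directly. Your argument is correct and is the natural one: the key observations are exactly that \(\falsW^j(\omega)=\falsW_1^j(\omega)\dotcup\falsW_2^j(\omega)\) via Lemma~\ref{lem:tolerance_partition_splitting}(1), that intersecting an inclusion \(\falsW^m(\omega)\subsetneqq\falsW^m(\omega')\) with \(\Delta_i\) preserves the inclusion componentwise with at least one component proper, and that the nonemptiness forced by the proper component guarantees the witnessing level \(m\) lies within the range \(\{0,\dots,l_i\}\) of the corresponding sub-partition. One small typographical slip: in your displayed identity you write \(\falsW_i^j(\omega)=\falsW^j(\omega)\cap\Delta_i\) but then the disjoint-union formula uses \(\falsW_1^j,\falsW_2^j\); just make sure the indexing is consistent when you write it up.
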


Note, that both \(\omega \worder{\Delta_1} \omega'\) and \(\omega \worder{\Delta_2} \omega'\) might be true.

The next Lemma~\ref{lem:worder_merge} considers the reverse direction of Lemma~\ref{lem:worder_split} and shows a situation where we can infer \(\omega \worderDelta \omega'\) from \(\omega \worder{\Delta_1} \omega'\) for a belief base with syntax splitting.

\begin{lemma}
	\label{lem:worder_merge}
	Let \(\Delta = \Delta_1 \splitcup \Delta_2\) %
	and	let \(\omega, \omega' \in \Omega\).
	If \(\omega \worder{\Delta_1} \omega'\) and \(\worldMargin{\omega}{\Sigma_2} = \worldMargin{\omega'}{\Sigma_2}\), then \(\omega \worderDelta \omega'\).
\end{lemma}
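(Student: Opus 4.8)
The plan is to take the partition level witnessing $\omega \worder{\Delta_1} \omega'$ and show it also witnesses $\omega \worderDelta \omega'$. By the definition of $\worder{\Delta_1}$ there is an index $m$ such that the $\Delta_1$-falsification sets of $\omega$ and $\omega'$ coincide on all levels above $m$ and satisfy the strict inclusion $\subsetneqq$ at level $m$. By Lemma~\ref{lem:tolerance_partition_splitting}(1) the levels of $\fctOP{\Delta_1}$ are aligned with those of $\fctOP{\Delta}$, since $\Delta_1^j = \Delta^j \cap \Delta_1$; hence $m$ is also a legitimate level of $\fctOP{\Delta}$, and as $m \le l_1 \le k$ it lies in the range required by Definition~\ref{def_w_structure_worlds}. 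I will show this $m$ works.

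Two ingredients drive the argument. First, by Lemma~\ref{lem:tolerance_partition_splitting}(1) together with the fact that $\{\Delta_1,\Delta_2\}$ partitions $\Delta$, every level of $\fctOP{\Delta}$ splits as the disjoint union $\falsW^j(\omega) = \falsW_{\Delta_1}^j(\omega) \dotcup \falsW_{\Delta_2}^j(\omega)$, where $\falsW_{\Delta_i}^j$ denotes the level-$j$ falsification function of the tolerance partition of $\Delta_i$ (understood as $\emptyset$ for $j > l_i$); disjointness is immediate since $\Delta_1^j$ and $\Delta_2^j$ are disjoint subsets of $\Delta^j$. Second, because the conditionals in $\Delta_2$ mention only variables from $\Sigma_2$, the set of $\Delta_2$-conditionals falsified by a world depends solely on its $\Sigma_2$-part; thus the hypothesis $\worldMargin{\omega}{\Sigma_2} = \worldMargin{\omega'}{\Sigma_2}$ gives $\falsW_{\Delta_2}^j(\omega) = \falsW_{\Delta_2}^j(\omega')$ for every $j$.

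With these in hand the verification is routine. For each level $i > m$ I will argue $\falsW^i(\omega) = \falsW^i(\omega')$: the $\Delta_2$-part agrees by the second ingredient, while the $\Delta_1$-part agrees either by the equality supplied by $\omega \worder{\Delta_1} \omega'$ (when $i \le l_1$) or trivially because $\Delta_1^i = \emptyset$ (when $i > l_1$). At level $m$ I will combine the strict inclusion $\falsW_{\Delta_1}^m(\omega) \subsetneqq \falsW_{\Delta_1}^m(\omega')$ with the equality $\falsW_{\Delta_2}^m(\omega) = \falsW_{\Delta_2}^m(\omega')$: forming the disjoint union with the common $\Delta_2$-part preserves strictness, because the conditional witnessing $\subsetneqq$ lies in $\Delta_1^m$ and is therefore absent from the $\Delta_2$-part. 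This yields $\falsW^m(\omega) \subsetneqq \falsW^m(\omega')$, so $m$ satisfies both clauses of Definition~\ref{def_w_structure_worlds} and $\omega \worderDelta \omega'$ follows.

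The only point requiring care — and the step I would watch most closely — is the bookkeeping with partition indices: one must confirm that $m$ genuinely indexes a level of $\fctOP{\Delta}$ (guaranteed by the alignment $\Delta_1^j = \Delta^j \cap \Delta_1$ from Lemma~\ref{lem:tolerance_partition_splitting}(1)) and that the case $i > l_1$, where the $\Delta_1$-partition is already exhausted while that of $\Delta$ need not be, is handled by the empty-level observation. Beyond this, no real difficulty arises, as preservation of strict inclusion under union with a disjoint common set is elementary.
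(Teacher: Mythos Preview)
The paper does not include a proof of Lemma~\ref{lem:worder_merge}; it is stated without argument (the only full proofs given are for Propositions~\ref{prop:systemW_rel} and~\ref{prop:systemW_ind}). Your argument is correct and is the natural one: use Lemma~\ref{lem:tolerance_partition_splitting} to decompose each level as \(\falsW^j(\cdot)=\falsW_{\Delta_1}^j(\cdot)\dotcup\falsW_{\Delta_2}^j(\cdot)\), observe that the \(\Delta_2\)-component is constant under the hypothesis \(\worldMargin{\omega}{\Sigma_2}=\worldMargin{\omega'}{\Sigma_2}\), and conclude that the witnessing index \(m\) for \(\worder{\Delta_1}\) also witnesses \(\worderDelta\). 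Your handling of the index bookkeeping (that \(m\le l_1\le k\) via Lemma~\ref{lem:tolerance_partition_splitting}(2), and that levels \(i>l_1\) contribute empty \(\Delta_1\)-parts) is exactly the care the argument requires, and your justification of strictness via disjointness is sound.
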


The next Lemma~\ref{lem:assignment_irrelevant} captures that in a world the variable assignment for variables that do not occur in the belief set has no influence on the position of this world in the resulting preferential structure on worlds.

\begin{lemma}
	\label{lem:assignment_irrelevant}
	Let \(\Delta = \Delta_1 \splitcup \Delta_2\) %
	and \(\omega^a, \omega^b, \omega' \in \Omega_\Sigma\) with \(\worldMargin{\omega^a}{\Sigma_1} = \worldMargin{\omega^b}{\Sigma_1}\).
	Then we have \(\omega^a \worder{\Delta_1} \omega'\) iff \(\omega^b \worder{\Delta_1} \omega'\).
\end{lemma}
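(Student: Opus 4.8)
The plan is to exploit that the relation $\worder{\Delta_1}$ depends on a world only through the sets of conditionals of $\Delta_1$ that the world falsifies, together with the elementary observation that falsification of a $\Sigma_1$-conditional is determined by the $\Sigma_1$-margin of the world alone. First I would unfold Definition~\ref{def_w_structure_worlds} applied to the belief base $\Delta_1$: writing $\fctOP{\Delta_1} = (\Delta_1^0, \dots, \Delta_1^{l_1})$, the relation $\worder{\Delta_1}$ is governed entirely by the falsification sets $\falsW^j$ taken with respect to this tolerance partition of $\Delta_1$. Concretely, for any worlds $\omega, \omega'$, whether $\omega \worder{\Delta_1} \omega'$ holds is a function solely of the two tuples $(\falsW^0(\omega), \dots, \falsW^{l_1}(\omega))$ and $(\falsW^0(\omega'), \dots, \falsW^{l_1}(\omega'))$, since the condition in \eqref{eq_sz_structure} quantifies only over equality and strict inclusion of these sets.

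The key step is then the locality of falsification. Because $\Delta_1 \subseteq \LcondSig{\Sigma_1}$, every conditional $r = \satzCL{B}{A} \in \Delta_1$ has $A, B \in \LpropSig{\Sigma_1}$; consequently, whether a world $\omega$ falsifies $r$, i.e.\ whether $\omega \models A\ol{B}$, depends only on the restriction $\worldMargin{\omega}{\Sigma_1}$. Since each part $\Delta_1^j$ of the tolerance partition satisfies $\Delta_1^j \subseteq \Delta_1 \subseteq \LcondSig{\Sigma_1}$, each falsification set $\falsW^j(\omega)$ is likewise determined by $\worldMargin{\omega}{\Sigma_1}$ alone. Invoking the hypothesis $\worldMargin{\omega^a}{\Sigma_1} = \worldMargin{\omega^b}{\Sigma_1}$, I obtain $\falsW^j(\omega^a) = \falsW^j(\omega^b)$ for every $j = 0, \dots, l_1$.

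Finally, I would substitute these equalities into the defining condition of $\worder{\Delta_1}$: the tuple of falsification sets read off from $\omega^a$ coincides with that read off from $\omega^b$, so the condition characterising $\omega^a \worder{\Delta_1} \omega'$ is literally the same condition as the one characterising $\omega^b \worder{\Delta_1} \omega'$. The claimed equivalence follows at once. I do not expect any real obstacle here; the entire content is that the preferred structure is \emph{local} to the margin of the signature actually mentioned by $\Delta_1$. The only point requiring care is to keep the falsification functions indexed by the tolerance partition $\fctOP{\Delta_1}$ (rather than by $\fctOP{\Delta}$) and to record that each $\Delta_1^j$ consists of $\Sigma_1$-conditionals, which is immediate from $\Delta_1^j \subseteq \LcondSig{\Sigma_1}$.
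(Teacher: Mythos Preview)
Your argument is correct: the observation that every conditional in $\Delta_1$ (and hence in every $\Delta_1^j$) lives in $\LcondSig{\Sigma_1}$, so that each $\falsW^j(\omega)$ with respect to $\fctOP{\Delta_1}$ depends only on $\worldMargin{\omega}{\Sigma_1}$, is exactly what is needed, and your unfolding of Definition~\ref{def_w_structure_worlds} is accurate. The paper itself does not include a proof of this lemma (the four auxiliary lemmas are stated without proof, presumably for space reasons), so there is nothing to compare against; your write-up is the natural argument one would expect here.
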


Now we can show that system W fulfils \relevance\ and \indep.

\begin{proposition}
	\label{prop:systemW_rel}
	System W fulfils \relevance.
\end{proposition}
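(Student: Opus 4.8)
The plan is to prove both directions of the biconditional in \relevance. By symmetry of the two parts of the splitting it suffices to treat the case $i=1$, so fix $A,B\in\LpropSig{\Sigma_1}$. The engine of the whole argument is the observation that the relation $\worder{\Delta_1}$ depends only on the $\Sigma_1$-projections of its two arguments: since every conditional in $\Delta_1$ is over $\Sigma_1$, whether a world falsifies it is determined by $\worldMargin{\cdot}{\Sigma_1}$, hence so are the falsification sets and the tolerance partition of $\Delta_1$. Lemma~\ref{lem:assignment_irrelevant} is exactly the first-argument instance of this; the second-argument instance (invariance of $\omega\worder{\Delta_1}\omega'$ under changing $\worldMargin{\omega'}{\Sigma_2}$) holds by the same reasoning, and the analogous statements hold for $\worder{\Delta_2}$ with the roles of $\Sigma_1,\Sigma_2$ swapped. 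I also use that, because $A,B\in\LpropSig{\Sigma_1}$, membership of a world in $\Omega_{AB}$ or $\Omega_{A\ol{B}}$ depends only on its $\Sigma_1$-projection.

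For the direction ``$A\wableit{\Delta_1}B$ implies $A\wableitDelta B$'', take any $\omega'\in\Omega_{A\ol{B}}$. By hypothesis there is $\omega\in\Omega_{AB}$ with $\omega\worder{\Delta_1}\omega'$. Replace the $\Sigma_2$-part of $\omega$ by that of $\omega'$, i.e.\ set $\omega^\ast=\worldMargin{\omega}{\Sigma_1}\cdot\worldMargin{\omega'}{\Sigma_2}$. Then $\omega^\ast\in\Omega_{AB}$ (same $\Sigma_1$-projection), and $\omega^\ast\worder{\Delta_1}\omega'$ by Lemma~\ref{lem:assignment_irrelevant}. Since now $\worldMargin{\omega^\ast}{\Sigma_2}=\worldMargin{\omega'}{\Sigma_2}$, Lemma~\ref{lem:worder_merge} yields $\omega^\ast\worderDelta\omega'$. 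As $\omega'$ was arbitrary, $A\wableitDelta B$ follows.

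For the converse ``$A\wableitDelta B$ implies $A\wableit{\Delta_1}B$'', fix any $\omega'\in\Omega_{A\ol{B}}$ and choose $\omega''\in\Omega$ with $\worldMargin{\omega''}{\Sigma_1}=\worldMargin{\omega'}{\Sigma_1}$ whose $\Sigma_2$-part is minimal with respect to $\worder{\Delta_2}$; such an $\omega''$ exists because $\worder{\Delta_2}$ is a strict partial order on the finite set $\Omega$ depending only on the $\Sigma_2$-projection, so its $\Sigma_1$-part can be fixed freely. Then $\omega''\in\Omega_{A\ol{B}}$. Applying $A\wableitDelta B$ to $\omega''$ gives some $\omega\in\Omega_{AB}$ with $\omega\worderDelta\omega''$, and Lemma~\ref{lem:worder_split} yields $\omega\worder{\Delta_1}\omega''$ or $\omega\worder{\Delta_2}\omega''$. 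The latter contradicts the $\worder{\Delta_2}$-minimality of $\omega''$, so $\omega\worder{\Delta_1}\omega''$ holds; since $\omega''$ and $\omega'$ share their $\Sigma_1$-projection, projection-invariance in the second argument gives $\omega\worder{\Delta_1}\omega'$. Hence $\omega$ witnesses $A\wableit{\Delta_1}B$.

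I expect the main obstacle to be this converse direction, and specifically the case of Lemma~\ref{lem:worder_split} in which $\omega\worderDelta\omega''$ is mediated only through $\Delta_2$: the trick of first passing to a $\worder{\Delta_2}$-minimal $\Sigma_2$-part to exclude that case, and then transporting the resulting $\Delta_1$-relation back to the original $\omega'$, is the crux. A secondary point to be careful about is that this transport uses projection-invariance in the \emph{second} argument of $\worder{\Delta_1}$, whereas Lemma~\ref{lem:assignment_irrelevant} is stated only for the first argument; I would either invoke the symmetric version explicitly or re-derive it directly from the definition of \worderDelta.
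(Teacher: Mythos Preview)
Your proof is correct and follows essentially the same approach as the paper: the easy direction is identical, and for the harder direction both you and the paper pass to a world with the same $\Sigma_1$-projection but a suitably minimal $\Sigma_2$-part, apply Lemma~\ref{lem:worder_split}, and rule out the $\Delta_2$-case by minimality. The only cosmetic difference is that you minimize directly with respect to $\worder{\Delta_2}$ whereas the paper minimizes with respect to $\worderDelta$ (and then uses Lemmas~\ref{lem:worder_merge} and~\ref{lem:assignment_irrelevant} to derive the contradiction); your observation that the second-argument variant of Lemma~\ref{lem:assignment_irrelevant} is needed is well taken, and in fact the paper uses it implicitly as well.
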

\begin{proof}
	Let \(\Delta = \Delta_1 \splitcup \Delta_2\) %
	and
	let \(A, B \in \LpropSig{\Sigma_1}\) be propositional formulas.
	W.l.o.g. we need to show that 
	\begin{equation}
		\label{eq:system_w_Rel_proof}
	A \wableitDelta B \text{ if and only if } A \wableit{\Delta_1} B.
	\end{equation}
	\emph{Direction \(\Rightarrow\) of \eqref{eq:system_w_Rel_proof}:} \quad
	Assume that \(A \wableitDelta B\). We need to show that \(A \wableit{\Delta_1} B\).
	Let \(\omega'\) be any world in \(\Omega_{A\ol{B}}\).
	Now choose \(\omega_{\mathit{min}}' \in \Omega\) such that 
	\begin{enumerate}
		\item \(\omega_{\mathit{min}}' \wordereqDelta \omega' \),
		\item \(\worldMargin{\omega'}{\Sigma_1} = \worldMargin{\omega_{\mathit{min}}'}{\Sigma_1}\), and
		\item there is no world \(\omega_{\mathit{min}2}'\) with \(\omega_{\mathit{min}2}' < \omega_{\mathit{min}}'\) that fulfils (1.) and (2.).
	\end{enumerate}
	Such an \(\omega_{\mathit{min}}'\) exists because \(\omega'\) fulfils properties (1.)\ and (2.), \(\worderDelta\) is irreflexive and transitive, and there are only finitely many worlds in \(\Omega\).
	
	Because of (2.)\ and because \(\omega' \models A\ol{B}\) we have that  \(\omega_{\mathit{min}}' \models A\ol{B}\).
	Because \(A \wableitDelta B\), there is a world \(\omega\) such that \(\omega \models AB\) and \(\omega \worderDelta \omega_{\mathit{min}}'\).
	Lemma~\ref{lem:worder_split} yields that \(\omega \worder{\Delta_1} \omega_{\mathit{min}}'\) or \(\omega \worder{\Delta_2} \omega_{\mathit{min}}'\).
	
	The case \(\omega \worder{\Delta_2} \omega_{\mathit{min}}'\) is not possible: Assuming \(\omega \worder{\Delta_2} \omega_{\mathit{min}}'\), it follows that \(\omega_{\mathit{min}2}' = (\worldMargin{\omega_{\mathit{min}}'}{\Sigma_1} \cdot \worldMargin{\omega}{\Sigma_2}) \worder{\Delta_2} \omega_{\mathit{min}}'\) with Lemma~\ref{lem:assignment_irrelevant}. %
	With Lemma~\ref{lem:worder_merge} it follows that \(\omega_{\mathit{min}2}' \worderDelta \omega_{\mathit{min}}'\). This contradicts (3.).
	Hence, \(\omega \worder{\Delta_1} \omega_{\mathit{min}}'\). %
	Because of (2.)\ and Lemma~\ref{lem:assignment_irrelevant}
	it follows that \(\omega \worder{\Delta_1} \omega'\). %
	
	As we can find an \(\omega\) such that \(\omega \worder{\Delta_1} \omega'\) and \(\omega \models AB\) for every \(\omega' \models A\ol{B}\) we have that \(A \wableit{\Delta_1} B\).

	\emph{Direction \(\Leftarrow\) of \eqref{eq:system_w_Rel_proof}:} \quad
	Assume that \(A \wableit{\Delta_1} B\). We need to show that \(A \wableitDelta B\).
	Let \(\omega'\) be any world in \(\Omega_{A\ol{B}}\).
	Because \(A \wableit{\Delta_1} B\), there is a world \(\omega^*\) such that \(\omega^* \models AB\) and \(\omega^* \worder{\Delta_1} \omega'\).
	Let \(\omega = (\worldMargin{\omega^*}{\Sigma_1} \cdot \worldMargin{\omega'}{\Sigma_2})\).
	Because \(\omega^* \models AB\) we have that \(\omega \models AB\).
	Furthermore, with Lemma~\ref{lem:assignment_irrelevant} it follows that \(\omega \worder{\Delta_1} \omega'\) and thus \(\omega \worderDelta \omega'\) with Lemma~\ref{lem:worder_merge}.
	
	As we can construct \(\omega\) such that \(\omega \worderDelta \omega'\) and \(\omega \models AB\) for every \(\omega' \models A\ol{B}\) we have that \(A \worderDelta B\).
\end{proof}

\begin{proposition}
	\label{prop:systemW_ind}
	System W fulfils \indep.
\end{proposition}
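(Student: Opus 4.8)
The plan is to derive both directions of \indep\ from the relevance result just established (Proposition~\ref{prop:systemW_rel}) together with Lemmas~\ref{lem:worder_split}--\ref{lem:assignment_irrelevant}. As in the proof of \relevance, I would fix without loss of generality \(A, B \in \LpropSig{\Sigma_1}\) and \(D \in \LpropSig{\Sigma_2}\) with \(D\) consistent, and show \(A \wableitDelta B\) iff \(AD \wableitDelta B\). The guiding observation is that \(\worder{\Delta_1}\) compares two worlds only through the \(\Delta_1\)-conditionals they falsify, and these depend solely on the \(\Sigma_1\)-parts of the worlds; Lemma~\ref{lem:assignment_irrelevant} records this for the left argument, and the identical argument yields the analogous statement for the right argument. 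The symmetric remark holds for \(\worder{\Delta_2}\) and the \(\Sigma_2\)-parts. These facts let me move the \(\Sigma_2\)-assignment of a world around freely whenever only \(\worder{\Delta_1}\) matters.

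For the direction \(A \wableitDelta B \Rightarrow AD \wableitDelta B\), I would first use Proposition~\ref{prop:systemW_rel} to rewrite the premise as \(A \wableit{\Delta_1} B\). Given an arbitrary \(\omega' \in \Omega_{AD\ol{B}}\), note \(\omega' \models A\ol{B}\), so \(A \wableit{\Delta_1} B\) supplies some \(\omega^* \models AB\) with \(\omega^* \worder{\Delta_1} \omega'\). I then copy the \(\Sigma_2\)-part of \(\omega'\) into \(\omega^*\), setting \(\omega = (\worldMargin{\omega^*}{\Sigma_1} \cdot \worldMargin{\omega'}{\Sigma_2})\). By construction \(\omega \models AB\) and \(\omega \models D\), i.e.\ \(\omega \in \Omega_{ADB}\); Lemma~\ref{lem:assignment_irrelevant} preserves \(\omega \worder{\Delta_1} \omega'\), and since now \(\worldMargin{\omega}{\Sigma_2} = \worldMargin{\omega'}{\Sigma_2}\), Lemma~\ref{lem:worder_merge} upgrades this to \(\omega \worderDelta \omega'\). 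As \(\omega'\) was arbitrary, \(AD \wableitDelta B\) follows.

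For the converse \(AD \wableitDelta B \Rightarrow A \wableitDelta B\), by Proposition~\ref{prop:systemW_rel} it suffices to establish \(A \wableit{\Delta_1} B\). Given any \(\omega' \in \Omega_{A\ol{B}}\), I would replace its \(\Sigma_2\)-part by a world \(\omega_2\) over \(\Sigma_2\) that satisfies \(D\) and is \emph{minimal} with respect to \(\worder{\Delta_2}\) among all \(D\)-models; such a world exists because \(D\) is consistent, \(\Omega_{\Sigma_2}\) is finite, and \(\worder{\Delta_2}\) is a strict partial order. Setting \(\tilde{\omega}' = (\worldMargin{\omega'}{\Sigma_1} \cdot \omega_2)\) gives \(\tilde{\omega}' \in \Omega_{AD\ol{B}}\), so \(AD \wableitDelta B\) yields some \(\omega \models ADB\) with \(\omega \worderDelta \tilde{\omega}'\). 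Lemma~\ref{lem:worder_split} splits this into \(\omega \worder{\Delta_1} \tilde{\omega}'\) or \(\omega \worder{\Delta_2} \tilde{\omega}'\). The second case is impossible: it would force \(\worldMargin{\omega}{\Sigma_2} \worder{\Delta_2} \omega_2\) with \(\worldMargin{\omega}{\Sigma_2} \models D\), contradicting the minimality of \(\omega_2\). Hence \(\omega \worder{\Delta_1} \tilde{\omega}'\), and since \(\tilde{\omega}'\) agrees with \(\omega'\) on \(\Sigma_1\), the right-argument version of Lemma~\ref{lem:assignment_irrelevant} gives \(\omega \worder{\Delta_1} \omega'\) with \(\omega \models AB\), establishing \(A \wableit{\Delta_1} B\).

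The step I expect to be the real obstacle is exactly this last direction, and specifically ruling out the \(\Delta_2\)-case of Lemma~\ref{lem:worder_split}: a witness for \(AD \wableitDelta B\) might a priori beat \(\tilde{\omega}'\) only through \(\Delta_2\), which would say nothing about the \(\Sigma_1\)-inference. Choosing the \(\Sigma_2\)-part of \(\tilde{\omega}'\) to be \(\worder{\Delta_2}\)-minimal among \(D\)-models is the device that removes this possibility, since any witness itself satisfies \(D\). A secondary point to handle carefully is that I invoke Lemma~\ref{lem:assignment_irrelevant} for the right-hand world; this is not literally its statement but follows by the same reasoning, because falsification of \(\Delta_1\)-conditionals depends only on a world's \(\Sigma_1\)-part.
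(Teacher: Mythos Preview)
Your argument is correct. It differs from the paper's proof mainly in organization: the paper establishes both directions of \indep\ directly on \(\wableitDelta\), each time choosing a \(\worderDelta\)-minimal world with prescribed \(\Sigma_1\)-part, invoking Lemma~\ref{lem:worder_split}, ruling out the \(\Delta_2\)-branch by minimality, and then rebuilding a witness via Lemmas~\ref{lem:assignment_irrelevant} and~\ref{lem:worder_merge}. You instead route both directions through Proposition~\ref{prop:systemW_rel}. This buys you a genuinely shorter \(\Rightarrow\)-direction (no minimality argument at all, just the \(\Leftarrow\)-half of the \relevance\ proof with the extra observation that the constructed witness inherits \(D\)), and in the \(\Leftarrow\)-direction it lets you target \(\wableit{\Delta_1}\) rather than \(\wableitDelta\), so you never need the final Lemma~\ref{lem:worder_merge} step. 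Your minimality device---choosing the \(\Sigma_2\)-part of \(\tilde{\omega}'\) to be \(\worder{\Delta_2}\)-minimal among \(D\)-models---is essentially the same trick the paper uses, phrased over \(\Omega_{\Sigma_2}\) rather than \(\Omega_\Sigma\); this is harmless since, as you note, \(\worder{\Delta_2}\) depends only on \(\Sigma_2\)-parts. The one point to make explicit is that both your proof and the paper's silently use the right-argument analogue of Lemma~\ref{lem:assignment_irrelevant}; you flag this, and it indeed follows by the identical reasoning.
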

\begin{proof}
	Let \(\Delta = \Delta_1 \splitcup \Delta_2\). %
	W.l.o.g. let \(A, B \in \LpropSig{\Sigma_1}\) and \(C \in \LpropSig{\Sigma_2}\) be propositional formulas such that \(C\) is consistent.
	We need to show that
	\begin{equation}
		\label{eq:system_w_Ind_proof}
	A \wableitDelta B \text{ if and only if } AC \wableitDelta BC.
	\end{equation}

	\emph{Direction \(\Rightarrow\) of \eqref{eq:system_w_Ind_proof}:} \quad
	Assume that \(A \wableitDelta B\). We need to show that \(AC \wableitDelta B\).
	Let \(\omega'\) be any world in \(\Omega_{A\ol{B}C}\).
	Now choose \(\omega_{\mathit{min}}' \in \Omega\) such that 
	\begin{enumerate}
		\item \(\omega_{\mathit{min}}' \wordereqDelta \omega' \),
		\item \(\worldMargin{\omega'}{\Sigma_1} = \worldMargin{\omega_{\mathit{min}}'}{\Sigma_1}\), and
		\item there is no world \(\omega_{\mathit{min}2}'\) with \(\omega_{\mathit{min}2}' < \omega_{\mathit{min}}'\) that fulfils (1.)\ and (2.).
	\end{enumerate}
	Such an \(\omega_{\mathit{min}}'\) exists because \(\omega'\) fulfils properties (1.)\ and (2.), \(\worderDelta\) is irreflexive and transitive, and there are only finitely many worlds in \(\Omega\).
	Because of (2.)\ and because \(\omega' \models A\ol{B}C\) we have that  \(\omega_{\mathit{min}}' \models A\ol{B}\).
	Because \(A \wableitDelta B\), there is a world \(\omega^*\) such that \(\omega^* \models AB\) and \(\omega^* \worderDelta \omega_{\mathit{min}}'\).
	Lemma~\ref{lem:worder_split} yields that either \(\omega^* \worder{\Delta_1} \omega_{\mathit{min}}'\) or \(\omega^* \worder{\Delta_2} \omega_{\mathit{min}}'\).
	
	The case \(\omega^* \worder{\Delta_2} \omega_{\mathit{min}}'\) is not possible: Assuming \(\omega^* \worder{\Delta_2} \omega_{\mathit{min}}'\), it follows that \(\omega_{\mathit{min}2}' = (\worldMargin{\omega_{\mathit{min}}'}{\Sigma_1} \cdot \worldMargin{\omega^*}{\Sigma_2}) \worder{\Delta_2} \omega_{\mathit{min}}'\) with Lemma~\ref{lem:assignment_irrelevant}.
	With Lemma~\ref{lem:worder_merge} it follows that \(\omega_{\mathit{min}2}' %
	\worderDelta \omega_{\mathit{min}}'\). 
	This contradicts (3.).
	Hence, \(\omega^* \worder{\Delta_1} \omega_{\mathit{min}}'\).
	Let \(\omega = (\worldMargin{\omega^*}{\Sigma_1} \cdot \worldMargin{\omega'}{\Sigma_2})\).
	Because \(\omega^* \models AB\) we have that \(\omega \models AB\). Because \(\omega' \models C\) we have that \(\omega \models C\).
	Because of (2.)\ and Lemma~\ref{lem:assignment_irrelevant} it follows that
	\(\omega \worder{\Delta_1} \omega'\) and thus with Lemma~\ref{lem:worder_merge} \(\omega \worderDelta \omega'\).
	
	As we can construct an \(\omega\) such that \(\omega \worderDelta \omega'\) and \(\omega \models ABC\) for every \(\omega' \models A\ol{B}C\) we have that \(AC \wableitDelta B\).

	\emph{Direction \(\Leftarrow\) of \eqref{eq:system_w_Ind_proof}:} \quad
	Assume that \(AC \wableitDelta BC\). We need to show that \(A \wableitDelta B\).
	Let \(\omega'\) be any world in \(\Omega_{A\ol{B}}\).
	Now choose \(\omega_{\mathit{min}}' \in \Omega\) such that 
	\begin{enumerate}
		\item \(\omega_{\mathit{min}}' \models C\)
		\item \(\worldMargin{\omega'}{\Sigma_1} = \worldMargin{\omega_{\mathit{min}}'}{\Sigma_1}\), and
		\item there is no world \(\omega_{\mathit{min}2}'\) with \(\omega_{\mathit{min}2}' < \omega_{\mathit{min}}'\) that fulfils (1.) and (2.).
	\end{enumerate}
	Such an \(\omega_{\mathit{min}}'\) exists because \(C\) is consistent, \(\worderDelta\) is irreflexive and transitive, and there are only finitely many worlds in \(\Omega\).
	Because of (2.)\ and because \(\omega' \models A\ol{B}\) we have that  \(\omega_{\mathit{min}}' \models A\ol{B}\).
	Because of (1.)\ we have that \(\omega_{\mathit{min}}' \models C\).
	Because \(AC \wableitDelta B\), there is a world \(\omega^*\) such that \(\omega^* \models ABC\) and \(\omega^* \worderDelta \omega_{\mathit{min}}'\).
	Lemma~\ref{lem:worder_split} yields that either \(\omega^* \worder{\Delta_1} \omega_{\mathit{min}}'\) or \(\omega^* \worder{\Delta_2} \omega_{\mathit{min}}'\).
	
	The case \(\omega^* \worder{\Delta_2} \omega_{\mathit{min}}'\) is not possible: Assuming \(\omega^* \worder{\Delta_2} \omega_{\mathit{min}}'\), it follows that \(\omega_{\mathit{min}2}' = (\worldMargin{\omega_{\mathit{min}}'}{\Sigma_1} \cdot \worldMargin{\omega^*}{\Sigma_2}) \worder{\Delta_2} \omega_{\mathit{min}}'\) with Lemma~\ref{lem:assignment_irrelevant}.
	With Lemma~\ref{lem:worder_merge} it follows that \(\omega_{\mathit{min}2}' %
	\worderDelta \omega_{\mathit{min}}'\).
	This contradicts (3.).
	Hence, \(\omega^* \worder{\Delta_1} \omega_{\mathit{min}}'\).
	Let \(\omega = (\worldMargin{\omega^*}{\Sigma_1} \cdot \worldMargin{\omega'}{\Sigma_2})\).
	Because \(\omega^* \models AB\) we have that \(\omega \models AB\).
	Because of (2.)\ and Lemma~\ref{lem:assignment_irrelevant} it follows that
	\(\omega \worder{\Delta_1} \omega'\) and thus with Lemma~\ref{lem:worder_merge} \(\omega \worderDelta \omega'\).
	
	As we can construct an \(\omega\) such that \(\omega \worderDelta \omega'\) and \(\omega \models AB\) for every \(\omega' \models A\ol{B}\) we have that \(A \wableitDelta B\).
\end{proof}

Combining Propositions~\ref{prop:systemW_rel} and \ref{prop:systemW_ind} yields that system W fulfils \splitpost.
\begin{proposition}
	System W fulfils \splitpost.
\end{proposition}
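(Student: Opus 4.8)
The plan is to observe that this proposition is an immediate corollary of the two preceding results together with the definition of \splitpost. Recall that \splitpost was not introduced as an independent postulate but rather as a \emph{conjunction}: an inductive inference operator satisfies \splitpost\ precisely when it satisfies both \relevance\ and \indep. Hence no new argument about the preferred structure \worderDelta\ is required here; the entire content has already been discharged in Propositions~\ref{prop:systemW_rel} and~\ref{prop:systemW_ind}.

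Concretely, I would first invoke Proposition~\ref{prop:systemW_rel}, which establishes that system W satisfies \relevance, i.e.\ for any \(\Delta = \Delta_1 \splitcup \Delta_2\) and any \(A, B \in \LpropSig{\Sigma_i}\) one has \(A \wableitDelta B\) iff \(A \wableit{\Delta_i} B\). Then I would invoke Proposition~\ref{prop:systemW_ind}, which establishes that system W satisfies \indep, i.e.\ for any such \(\Delta\), any \(A, B \in \LpropSig{\Sigma_i}\), and any consistent \(D \in \LpropSig{\Sigma_j}\) with \(i \neq j\), one has \(A \wableitDelta B\) iff \(AD \wableitDelta B\). Since system W is an inductive inference operator (it is defined as a mapping \(\Delta \mapsto \wableitDelta\) satisfying (DI) and (TV)), both postulates apply to it, and their simultaneous satisfaction is exactly what the definition of \splitpost\ demands. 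The conclusion therefore follows directly.

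The only thing to watch is that all three postulates be stated over the same class of syntax splittings \(\Delta = \Delta_1 \splitcup \Delta_2\) with the same quantifier ranges over sub-signatures, so that the two propositions can be read off without any reindexing or case adjustment; this is already guaranteed by the uniform formulation in Sec.~\ref{sec_synsplit}. There is no genuine obstacle in this final step: the substantive work lies entirely in the four lemmas on \worderDelta\ and in Propositions~\ref{prop:systemW_rel} and~\ref{prop:systemW_ind}, and this proposition merely assembles them. The proof is thus a single sentence citing the two propositions and the definition of \splitpost.
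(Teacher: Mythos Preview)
Your proposal is correct and matches the paper's approach exactly: the paper simply states that combining Propositions~\ref{prop:systemW_rel} and~\ref{prop:systemW_ind} yields the result, since \splitpost\ is by definition the conjunction of \relevance\ and \indep.
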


\section{Conclusions and Further Work}
\label{sec:conclusion}

In this short paper, we showed that the recently introduced System W that extends rational closure and c-inference, also fully complies with syntax splitting.
In our current work, we are studying the effect of syntax splitting on the preferred structure on worlds in more detail, and are investigating further properties of system W.

\bibliographystyle{kr}
\bibliography{literature}

\begin{thebibliography}{}

\bibitem[\protect\citeauthoryear{Adams}{1965}]{Adams1965}
Adams, E.
\newblock 1965.
\newblock {The Logic of Conditionals}.
\newblock {\em Inquiry} 8(1-4):166--197.

\bibitem[\protect\citeauthoryear{Beierle \bgroup et al\mbox.\egroup
  }{2018}]{BeierleEichhornKernIsbernerKutsch2018AMAI}
Beierle, C.; Eichhorn, C.; Kern-Isberner, G.; and Kutsch, S.
\newblock 2018.
\newblock Properties of skeptical c-inference for conditional knowledge bases
  and its realization as a constraint satisfaction problem.
\newblock {\em Ann. Math. Artif. Intell.} 83(3-4):247--275.

\bibitem[\protect\citeauthoryear{Benferhat \bgroup et al\mbox.\egroup
  }{1993}]{BenferhatCayrolDuboisLangPrade1993}
Benferhat, S.; Cayrol, C.; Dubois, D.; Lang, J.; and Prade, H.
\newblock 1993.
\newblock {Inconsistency Management and Prioritized Syntax-Based Entailment}.
\newblock In {\em Proceedings of the Thirteenth International Joint Conference
  on Artificial Intelligence (IJCAI'93)}, volume~1,  640--647.
\newblock San Francisco, CA, USA: Morgan Kaufmann Publishers.

\bibitem[\protect\citeauthoryear{de Finetti}{1937}]{deFinetti37orig}
de~Finetti, B.
\newblock 1937.
\newblock La pr\'{e}vision, ses lois logiques et ses sources subjectives.
\newblock {\em Ann. Inst. H. Poincar\'{e}} 7(1):1--68.
\newblock Engl. transl. \emph{Theory of Probability}, J. Wiley \& Sons, 1974.

\bibitem[\protect\citeauthoryear{Goldszmidt and
  Pearl}{1996}]{GoldszmidtPearl1996}
Goldszmidt, M., and Pearl, J.
\newblock 1996.
\newblock {Qualitative probabilities for default reasoning, belief revision,
  and causal modeling}.
\newblock {\em Artificial Intelligence} 84(1-2):57--112.

\bibitem[\protect\citeauthoryear{Hawthorne and
  Makinson}{2007}]{HawthorneMakinson2007StudiaLogica}
Hawthorne, J., and Makinson, D.
\newblock 2007.
\newblock The quantitative/qualitative watershed for rules of uncertain
  inference.
\newblock {\em Studia Logica} 86(2):247--297.

\bibitem[\protect\citeauthoryear{Kern{-}Isberner and
  Brewka}{2017}]{Kern-IsbernerBrewka17}
Kern{-}Isberner, G., and Brewka, G.
\newblock 2017.
\newblock Strong syntax splitting for iterated belief revision.
\newblock In {\em Proceedings of the Twenty-Sixth International Joint
  Conference on Artificial Intelligence, {IJCAI} 2017, Melbourne, Australia,
  August 19-25, 2017},  1131--1137.

\bibitem[\protect\citeauthoryear{Kern-Isberner, Beierle, and
  Brewka}{2020}]{KernIsbernerBeierleBrewka2020KR}
Kern-Isberner, G.; Beierle, C.; and Brewka, G.
\newblock 2020.
\newblock Syntax splitting = relevance + independence: {N}ew postulates for
  nonmonotonic reasoning from conditional belief bases.
\newblock In {\em KR-2020},  560--571.

\bibitem[\protect\citeauthoryear{Kern-Isberner}{2001}]{KernIsberner2001}
Kern-Isberner, G.
\newblock 2001.
\newblock {\em {Conditionals in Nonmonotonic Reasoning and Belief Revision --
  Considering Conditionals as Agents}}.
\newblock Number 2087 in Lecture Notes in Computer Science. Berlin, DE:
  Springer Science+Business Media.

\bibitem[\protect\citeauthoryear{Kern-Isberner}{2004}]{KernIsberner2004AMAI}
Kern-Isberner, G.
\newblock 2004.
\newblock A thorough axiomatization of a principle of conditional preservation
  in belief revision.
\newblock {\em Ann. Math. Artif. Intell.} 40(1-2):127--164.

\bibitem[\protect\citeauthoryear{Komo and Beierle}{2022}]{KomoBeierle2022AMAI}
Komo, C., and Beierle, C.
\newblock 2022.
\newblock Nonmonotonic reasoning from conditional knowledge bases with system
  {W}.
\newblock {\em Ann. Math. Artif. Intell.} 90(1):107--144.

\bibitem[\protect\citeauthoryear{Kraus, Lehmann, and
  Magidor}{1990}]{KrausLehmannMagidor1990}
Kraus, S.; Lehmann, D.~J.; and Magidor, M.
\newblock 1990.
\newblock {Nonmonotonic Reasoning, Preferential Models and Cumulative Logics}.
\newblock {\em Artificial Intelligence} 44(1-2):167--207.

\bibitem[\protect\citeauthoryear{Lehmann and
  Magidor}{1992}]{LehmannMagidor92short}
Lehmann, D., and Magidor, M.
\newblock 1992.
\newblock What does a conditional knowledge base entail?
\newblock {\em Artif. Intell.} 55:1--60.

\bibitem[\protect\citeauthoryear{Parikh}{1999}]{Parikh99}
Parikh, R.
\newblock 1999.
\newblock Beliefs, belief revision, and splitting languages.
\newblock {\em Logic, Language, and Computation} 2:266--278.

\bibitem[\protect\citeauthoryear{Pearl}{1990}]{Pearl1990systemZTARK}
Pearl, J.
\newblock 1990.
\newblock System {Z}: {A} natural ordering of defaults with tractable
  applications to nonmonotonic reasoning.
\newblock In {\em Proc. of the 3rd Conf. on Theoretical Aspects of Reasoning
  About Knowledge (TARK'1990)},  121--135.
\newblock San Francisco, CA, USA: Morgan Kaufmann Publ. Inc.

\bibitem[\protect\citeauthoryear{Peppas \bgroup et al\mbox.\egroup
  }{2015}]{PeppasWilliamsChopraFoo2015}
Peppas, P.; Williams, M.-A.; Chopra, S.; and Foo, N.~Y.
\newblock 2015.
\newblock Relevance in belief revision.
\newblock {\em Artificial Intelligence} 229((1-2)):126--138.

\end{thebibliography}

\end{document}